\algnewcommand{\IfInline}[2]{\State \algorithmicif\ #1 \algorithmicthen\ ~#2~}
\newtheorem{theorem}{Theorem}
\DeclareMathOperator{\diag}{diag}
\DeclareMathOperator{\const}{const}
\newcommand*\diff{\mathop{}\!\mathrm{d}}
\newcommand*\vs[1]{\bm{\mathit{#1}}} 
\newcommand*\ts[1]{\mathbf{#1}} 
\DeclareRobustCommand{\frac}[3][0pt]{{\begingroup\hspace{#1}#2\hspace{#1}\endgroup\over\hspace{#1}#3\hspace{#1}}} 
\newcommand\Tstrut{\rule{0pt}{2.6ex}}         
\newcommand\Bstrut{\rule[-0.9ex]{0pt}{0pt}}   
\title{Thermostat-assisted continuously-tempered\\Hamiltonian Monte Carlo for Bayesian learning}
\author{
Rui Luo$^1$, Jianhong Wang\thanks{Equal}$\,\:^1$, Yaodong Yang\footnotemark[1]$\,\:^1$, Zhanxing Zhu$^2$, and Jun Wang\thanks{Correspondence to: \texttt{j.wang@cs.ucl.ac.uk}}$\,\,^1$ \\
$^1$University College London, $^2$Peking University \\
}
\begin{document}

\maketitle

\vspace{-0.45em}
\begin{abstract}
We propose a new sampling method, the thermostat-assisted continuously-tempered Hamiltonian Monte Carlo, for Bayesian learning on large datasets and multimodal distributions. It simulates the Nos\'e-Hoover dynamics of a continuously-tempered Hamiltonian system built on the distribution of interest. A significant advantage of this method is that it is not only able to efficiently draw representative i.i.d. samples when the distribution contains multiple isolated modes, but capable of adaptively neutralising the noise arising from mini-batches and maintaining accurate sampling. While the properties of this method have been studied using synthetic distributions, experiments on three real datasets also demonstrated the gain of performance over several strong baselines with various types of neural networks plunged in.
\end{abstract}

\section{Introduction}
\label{sec:introduction}

\vspace{-0.5em}
Bayesian learning via Markov chain Monte Carlo (MCMC) methods is appealing for its inborn nature of characterising the uncertainty within the learnable parameters. However, when the distributions of interest contain multiple modes, rapid exploration on the corresponding multimodal landscapes w.r.t. the parameters becomes difficult using classic methods \cite{feroz2008multimodal,neal2011mcmc}. In particular, given a large number of modes, some ``distant'' ones might be beyond the reach from others; this would potentially lead to the so-called \emph{pseudo-convergence} \cite{brooks2011handbook}, where the guarantee of ergodicity for MCMC methods breaks.

\vspace{-0.20em}
To make things worse, Bayesian learning on large datasets is typically conducted in an online setting: at each of the iterations, only a subset, i.e. a mini-batch, of the dataset is utilised to update the model parameters \cite{welling2011bayesian}. Although the computational complexity is substantially reduced, those mini-batches inevitably introduce noise into the system and therefore increase the uncertainty within the parameters, making it harder to properly sample multimodal distributions.

\vspace{-0.20em}
In this paper, we propose a new sampling method, referred to as the thermostat-assisted continuously-tempered Hamiltonian Monte Carlo, to address the aforementioned problems and to facilitate Bayesian learning on large datasets and multimodal posterior distributions. We extend the classic Hamiltonian Monte Carlo (HMC) with the scheme of continuous tempering stemming from the recent advances in physics \cite{gobbo2015extended} and chemistry \cite{lenner2016continuous}. The extended dynamics governs the variation on effective temperature for the distribution of interest in a continuous and systematic fashion such that the sampling trajectory can readily overcome high energy barriers and rapidly explore the entire parameter space. In addition to tempering, we also introduce a set of Nos\'e-Hoover thermostats \cite{nose1984unified,hoover1985canonical} to handle the noise arising from the use of mini-batches. The thermostats are integrated into the tempered dynamics so that the mini-batch noise can be effectively recognised and automatically neutralised. In short, the proposed method leverages continuous tempering to enhance the sampling efficiency, especially for multimodal distributions; it makes use of Nos\'e-Hoover thermostats to adaptively dissipate the instabilities caused by mini-batches so that the desired distributions can be recovered. Various experiments are conducted to demonstrate the effectiveness of the new method: it consistently outperforms several samplers and optimisers on the accuracy of image classification with different types of neural network.


\section{Preliminaries}
\label{sec:preliminaries}

We review HMC \cite{duane1987hybrid} and continuous tempering \cite{gobbo2015extended,lenner2016continuous}, the two bases of our model, where the former serves as a \emph{de facto} standard for Bayesian sampling and the latter is a state-of-the-art solution to the acceleration of molecular dynamics simulations on complex physical systems. 

\subsection{Hamiltonian Monte Carlo for posterior sampling}
\label{subsec:hmc}

Bayesian posterior sampling aims at efficiently generating i.i.d. samples  from the posterior $\rho(\vs{\theta}|\mathscr{D})$ of the variable of interest $\vs{\theta}$ given some dataset $\mathscr{D}$. Provided the prior $\rho(\vs{\theta})$ and the likelihood $\mathscr{L}(\vs{\theta};\mathscr{D})$ along with the dataset $\mathscr{D}=\{\vs{x}_i\}$ with $|\mathscr{D}|$ independent data points $\vs{x}_i$, the target posterior to generate samples from can be formulated as
\begin{align}
\label{eq:posterior}
\rho(\vs{\theta}|\mathscr{D}) \propto \rho(\vs{\theta})\mathscr{L}(\vs{\theta};\mathscr{D}) = \rho(\vs{\theta})\prod_i^{|\mathscr{D}|} \ell(\vs{\theta};\vs{x}_i), ~\mbox{with the likelihood per data point}~ \ell(\vs{\theta};\vs{x}_i).
\end{align}
In a typical HMC setting \cite{neal2011mcmc}, a physical system is constructed and connected with the target posterior in Eq. \eqref{eq:posterior} via the system's potential, which is defined as
\begin{align}
\label{eq:potential}
U(\vs{\theta}) = -\log\rho(\vs{\theta}|\mathscr{D}) = -\log\rho(\vs{\theta}) - \sum_{i=1}^{|\mathscr{D}|} \log\ell(\vs{\theta};\vs{x}_i) - \const.
\end{align}
In this system, the variable of interest $\vs{\theta}\in\mathbb{R}^D$, referred to as the system configuration, is interpreted as the joint position of all physical objects within that system. An auxiliary variable $\vs{p}_{\theta}\in\mathbb{R}^D$ is then introduced as the conjugate momentum w.r.t. $\vs{\theta}$ to describe its rate of change. The tuple $\ts\Gamma = (\vs{\theta}, \vs{p}_{\theta})$ represents the state of the physical system that uniquely determines the characteristics of that system. A predefined constant matrix $\vs{M}_{\theta} = \diag[m_{\theta_i}]$ specifies the masses of the objects associated with $\vs{\theta}$ and can be leveraged for preconditioning.

The energy function $H(\ts\Gamma)$ of the physical system, referred to as the Hamiltonian, is essentially the sum of the potential in Eq. \eqref{eq:potential} and the conventional quadratic kinetic energy: $H(\ts\Gamma) = U(\vs{\theta}) + \vs{p}_{\theta}^{\top}\vs{M}_{\theta}^{-1}\vs{p}_{\theta}/2$. The Hamiltonian dynamics, i.e. the Hamilton's equations of motion, can be derived by applying the Hamiltonian formalism $[\dot{\vs{\theta}} = \partial_{\vs{p}_{\theta}} H,~ \dot{\vs{p}}_{\theta} = -\partial_{\vs{\theta}} H]$ to $H(\ts\Gamma)$, where $\dot{\vs{\theta}}$ and $\dot{\vs{p}}_{\theta}$ denote the time derivatives.

The Hamiltonian dynamics, on one hand, describes the time evolution of system from a microscopic perspective. The principle of statistical physics, on the other hand, states in a macroscopic sense that given a physical system in thermal equilibrium with a heat bath at a fixed temperature $T$, the states $\ts\Gamma$ of that system are distributed as a particular distribution related to the system's Hamiltonian $H(\ts\Gamma)$:\begin{align}
\label{eq:canonical}
\pi(\ts\Gamma) = \frac{1}{Z_{\Gamma}(T)} e^{-\nicefrac{H(\ts\Gamma)}{T}}, ~\mbox{with the normalising constant}~ Z_{\Gamma}(T) = \sum_{\ts\Gamma} e^{-\nicefrac{H(\ts\Gamma)}{T}}.
\end{align}
Such distribution is referred to as the canonical distribution. Note that by setting $T = 1$ and $U(\vs{\theta})$ as in Eq. \eqref{eq:potential}, the canonical distribution in Eq. \eqref{eq:canonical} can be marginalised as the posterior in Eq. \eqref{eq:posterior}.


\subsection{Continuous tempering}
\label{subsec:tempering}

In physical chemistry, continuous tempering \cite{gobbo2015extended,lenner2016continuous} is currently a state-of-the-art method to accelerate molecular dynamics simulations by means of continuously and systematically varying the temperature of a physical system. It extends the original system by coupling with additional degrees of freedom, namely the tempering variable $\xi\in\mathbb{R}$ with mass $m_{\xi}$ as well as its conjugate momentum $p_{\xi}\in\mathbb{R}$, which control the effective temperature of the original system in a continuous fashion via the Hamiltonian dynamics of the extended system. With a suitable choice of coupling function $\lambda(\xi)$ and a compatible confining potential $W(\xi)$, the Hamiltonian of the extended system can be designed as
\begin{align}
\label{eq:extended-hamiltonian}
H(\ts\Gamma) = \lambda(\xi) U(\vs{\theta}) + W(\xi) + \vs{p}_{\theta}^{\top}\vs{M}_{\theta}^{-1}\vs{p}_{\theta}/2 + p_{\xi}^2/2m_{\xi},
\end{align}
where $\ts\Gamma = (\vs{\theta},\xi,\vs{p}_{\theta},p_{\xi})$ represents the state of the extended system with the position of the tempering variable $\xi$ and its momentum $p_{\xi}$ appended to the state of the original system $(\vs{\theta}, \vs{p}_{\theta})$. $\lambda(\xi)\in\mathbb{R}^+$ maps the tempering variable to a multiplier of temperature so that the effective temperature of the original system $T/\lambda(\xi)$ can vary; its domain $\mathrm{dom}\lambda(\xi)\subset\mathbb{R}$ is a finite interval regulated by $W(\xi)$.


\section{Thermostat-assisted continuously-tempered Hamiltonian Monte Carlo}
\label{sec:tact-hmc}

We propose a sampling method, called the thermostat-assisted continuously-tempered Hamiltonian Monte Carlo (TACT-HMC), for multimodal posterior sampling in the presence of unknown noise. TACT-HMC leverages the extended Hamiltonian in Eq. \eqref{eq:extended-hamiltonian} to raise and vary the effective temperature continuously; it efficiently lowers the energy barriers between modes and hence accelerates sampling. Our method also incorporates the Nos\'e-Hoover thermostats to effectively recognise and automatically neutralise the noise arising from the use of mini-batches. 

\subsection{System dynamics with the Nos\'e-Hoover augmentation}
\label{subsec:tact-hmc-dynamics}

In solving for the system dynamics, we apply the Hamiltonian formalism to the extended Hamiltonian in Eq. \eqref{eq:extended-hamiltonian}, which requires the potential $U(\vs{\theta})$ and gradient $\nabla_{\vs{\theta}} U(\vs{\theta})$. We define hereafter the negative gradient of the potential $U(\vs{\theta})$ as the induced force $\vs{f}(\vs{\theta}) = -\nabla_{\vs{\theta}} U(\vs{\theta})$. Because the calculation of either $U(\vs{\theta})$ or $\vs{f}(\vs{\theta})$ involves the full dataset $\mathscr{D} = \{\vs{x}_i\}$, it is computationally expensive or even unaffordable to calculate the actual values for large $|\mathscr{D}|$. Instead, we consider the mini-batch approximations:
\begin{align*}
\tilde{U}(\vs{\theta}) = -\log\rho(\vs{\theta}) - \frac{|\mathscr{D}|}{|\mathscr{S}|}\sum_{k=1}^{|\mathscr{S}|}\log\ell(\vs{\theta};\vs{x}_{i_k})
\quad\mbox{and}\quad \tilde{\vs{f}}(\vs{\theta}) = \nabla_{\vs{\theta}}\log\rho(\vs{\theta}) + \frac{|\mathscr{D}|}{|\mathscr{S}|}\sum_{k=1}^{|\mathscr{S}|}\nabla_{\vs{\theta}}\log\ell(\vs{\theta};\vs{x}_{i_k}),
\end{align*}
where $\vs{x}_{i_k}$ denotes the data point sampled from mini-batches $\mathscr{S} = \{\vs{x}_{i_{k}}\} \subset \mathscr{D}$ with the size $|\mathscr{S}| \ll |\mathscr{D}|$. It is clear that $\tilde{U}(\vs{\theta})$ and $\tilde{\vs{f}}(\vs{\theta})$ are unbiased estimators of $U(\vs{\theta})$ and $\vs{f}(\vs{\theta})$. 

As we assume $\vs{x}_{i_k}$ being mutually independent, $\tilde{U}(\vs{\theta})$ and $\tilde{\vs{f}}(\vs{\theta})$ are sums of $|\mathscr{S}|$ i.i.d. random variables, where the Central Limit Theorem (CLT) applies: the mini-batch approximations converge to Gaussian, i.e. $\tilde{U}(\vs{\theta}) \to \mathscr{N}(U(\vs{\theta}),v_U(\vs{\theta}))$ and $\tilde{\vs{f}}(\vs{\theta}) \to \mathscr{N}(\vs{f}(\vs{\theta}),\vs{V}_f(\vs{\theta}))$ with variances $v_U(\vs{\theta})$ and $\vs{V}_f(\vs{\theta})$ being finite. The randomness within $\tilde{U}(\vs{\theta})$ and $\tilde{\vs{f}}(\vs{\theta})$ gives rise to the noise in the system dynamics. We thus utilize a set of independent Nos\'e-Hoover thermostats \cite{nose1984unified,hoover1985canonical} -- apparatuses originally devised for temperature stabilisation in molecular dynamics simulations -- to adaptively cancel the effect of noise. The system dynamics with the augmentation of thermostats -- we call \emph{Nos\'e-Hoover dynamics} -- is formulated as
\begin{align}
\frac{\diff{\vs{\theta}}}{\diff{t}} &= \vs{M}_{\theta}^{-1}\vs{p}_{\theta}, \qquad\;\;\;\; \frac{\diff{\vs{p}_{\theta}}}{\diff{t}} = \lambda(\xi)\:\! \tilde{\vs{f}}(\vs{\theta}) - \lambda^2(\xi) \vs{S}_{\theta}\vs{p}_{\theta}, \qquad\;\;\;\;\: \frac{\diff{s_{\theta}^{\langle i,j \rangle}}}{\diff{t}} = \frac{\lambda^2(\xi)}{\kappa_{\theta}^{\langle i,j \rangle}} \bigg[\frac{p_{\theta_i}p_{\theta_j}}{m_{\theta_i}} - T\delta_{ij}\bigg],\notag\\
\label{eq:tact-hmc-dynamics}
\frac{\diff{\xi}}{\diff{t}} &= \frac{p_{\xi}}{m_{\xi}}, \quad\;\;\; \frac{\diff{p_{\xi}}}{\diff{t}} = -\lambda'(\xi) \tilde{U}(\vs{\theta}) - W'(\xi) - \big[\lambda'(\xi)\big]^2 s_{\xi}p_{\xi}, \quad \frac{\diff{s_{\xi}}}{\diff{t}} = \frac{\big[\lambda'(\xi)\big]^2}{\kappa_{\xi}} \bigg[\frac{p_{\xi}^2}{m_{\xi}} - T\bigg],
\end{align}
where $\vs{S}_{\theta}$ and $s_{\xi}$ denote the Nos\'e-Hoover thermostats coupled with $\vs{\theta}$ and $\xi$. Specifically, $\vs{S}_{\theta} = \big[s_{\theta}^{\langle i,j \rangle}\big]$ is a $D\times D$ matrix with the $(i,j)$-th elements $s_{\theta}^{\langle i,j \rangle}$ dependent upon the multiplicative term $p_{\theta_i}p_{\theta_j}/m_{\theta_i}$. $\kappa_{\theta}^{\langle i,j \rangle}$ and $\kappa_{\xi}$ are constants that denote the ``thermal inertia'' corresponding to $s_{\theta}^{\langle i,j \rangle}$ and $s_{\xi}$, respectively. Intuitively, the thermostats $\vs{S}_{\theta}$ and $s_{\xi}$ act as negative feedback controllers on the momenta $\vs{p}_{\theta}$ and $p_{\xi}$. Consider the dynamics of $s_{\xi}$ in Eq. \eqref{eq:tact-hmc-dynamics}, when $p_{\xi}^2/m_{\xi}$ exceeds the reference $T$, the thermostat $s_{\xi}$ will increase, leading to a greater friction $-s_{\xi}p_{\xi}$ in updating $p_{\xi}$; the friction in turn reduces the magnitude of $p_{\xi}$, resulting in a decrease in the value of $p_{\xi}^2/m_{\xi}$. The negative feedback loop is thus established. With the help of thermostats, the noise injected into the system can be adaptively neutralised. 

We define the diffusion coefficients $b_U \coloneqq v_U \diff{t}/2$ and $\vs{B}_f = \big[b_f^{\langle i,j \rangle}\big] \coloneqq \vs{V}_f \diff{t}/2$, which are non-zero, of course, but essentially infinitesimal in a sense that their dependency upon $\vs{\theta}$ can be readily dropped. It allows that the variances $v_U$ and $\vs{V}_f$ of mini-batch approximations evaluated at each of the discrete iterations can be embedded in the Fokker-Planck equation (FPE) \cite{risken1989fpe} established in continuous time. FPE translates the microscopic motion of particles, formulated by SDEs, into the macroscopic time evolution of the state distribution in the form of PDEs. With FPE leveraged, we establish the theorem as follows to characterise the invariant distribution:
\begin{theorem}
\label{thm:main}
The system governed by the dynamics in Eq. \eqref{eq:tact-hmc-dynamics} has the invariant distribution
\begin{align}
\label{eq:invariant}
\pi(\ts\Gamma,\vs{S}_{\theta},s_{\xi}) \propto e^{-\bigg[ H(\ts\Gamma) ~+~ \left(s_{\xi} - \frac{b_U}{m_{\xi}T}\right)^2 \kappa_{\xi}/2 ~+~ \sum_{i,j} \left(s_{\theta}^{\langle i,j \rangle} - \frac{b_f^{\langle i,j \rangle}}{m_{\theta_j}T}\right)^2 \kappa_{\theta}^{\langle i,j \rangle}\big/2 \bigg] \Big/ T},
\end{align}
where $\ts\Gamma = (\vs{\theta},\xi,\vs{p}_{\theta},p_{\xi})$ denotes the extended state as presented in Eq. \eqref{eq:extended-hamiltonian}, if the system is ergodic.
\end{theorem}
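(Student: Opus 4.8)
The plan is to write down the Fokker--Planck equation (FPE) attached to the stochastic dynamics in Eq.~\eqref{eq:tact-hmc-dynamics} and to verify by direct substitution that the claimed density $\pi^\star \propto e^{-G/T}$, with $G$ the bracketed expression in Eq.~\eqref{eq:invariant}, is annihilated by the forward operator $\mathcal{L}^\dagger$. First I would read off the drift: the position updates $\dot{\vs{\theta}},\dot{\xi}$, the $\vs{S}_{\theta}$- and $s_{\xi}$-updates, and the momentum updates are all deterministic except that $\tilde{\vs{f}}$ and $\tilde{U}$ carry Gaussian fluctuations (by the CLT, as noted) with covariances $\vs{V}_f$ and variance $v_U$. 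Under the stated infinitesimal-diffusion assumption these enter the FPE only as additive, $\vs{\theta}$-independent second-order terms: $\lambda^2(\xi)\sum_{i,j} b_f^{\langle i,j\rangle}\,\partial_{p_{\theta_i}}\partial_{p_{\theta_j}}$ acting on $\vs{p}_{\theta}$, and $[\lambda'(\xi)]^2 b_U\,\partial_{p_{\xi}}^2$ acting on $p_{\xi}$; crucially the thermostat equations and the position equations contribute no noise.

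Next I would split $\mathcal{L}^\dagger$ into three blocks and check each one kills $\pi^\star$ separately. (i) The \emph{conservative block} collects $\dot{\vs{\theta}}=\vs{M}_{\theta}^{-1}\vs{p}_{\theta}$, $\dot{\xi}=p_{\xi}/m_{\xi}$, the force $\lambda(\xi)\vs{f}(\vs{\theta})=-\partial_{\vs{\theta}}H$, and $-\lambda'(\xi)U-W'(\xi)=-\partial_{\xi}H$: this is exactly the Hamiltonian vector field $X_H$ of the extended Hamiltonian, which is divergence-free and satisfies $X_H\cdot\nabla H=0$; since the extra $s_{\xi}$- and $\vs{S}_{\theta}$-dependent terms of $G$ are not moved by $X_H$, one gets $\nabla\cdot(X_H\,\pi^\star)=-\tfrac1T(X_H\cdot\nabla G)\,\pi^\star=0$. (ii) The \emph{$\xi$-thermostat block} gathers the friction $-[\lambda'(\xi)]^2 s_{\xi}p_{\xi}$, the $s_{\xi}$-drift $\tfrac{[\lambda'(\xi)]^2}{\kappa_{\xi}}(p_{\xi}^2/m_{\xi}-T)$, and the $b_U$-diffusion; using $\partial_{p_{\xi}}\pi^\star=-(p_{\xi}/m_{\xi}T)\pi^\star$ and $\partial_{s_{\xi}}\pi^\star=-(\kappa_{\xi}/T)\bigl(s_{\xi}-b_U/(m_{\xi}T)\bigr)\pi^\star$, a short computation shows these three contributions cancel, the shift $b_U/(m_{\xi}T)$ being precisely the value that makes the diffusion-generated term offset the surplus. (iii) The \emph{$\vs{\theta}$-thermostat block} is the same computation with matrix/index bookkeeping: differentiating $\pi^\star$ twice in $p_{\theta_i},p_{\theta_j}$ produces the factor $1/(m_{\theta_j}T)$ that matches the shift $b_f^{\langle i,j\rangle}/(m_{\theta_j}T)$ appearing in Eq.~\eqref{eq:invariant}, and friction, thermostat-drift and diffusion again sum to zero. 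Adding the three blocks gives $\mathcal{L}^\dagger\pi^\star=0$, so $\pi^\star$ is a stationary density; ergodicity (which is assumed) upgrades this to \emph{the} invariant distribution.

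I expect the main obstacle to be not the cancellations --- which are mechanical once the thermostat shifts $b_U/(m_{\xi}T)$ and $b_f^{\langle i,j\rangle}/(m_{\theta_j}T)$ are guessed --- but rather setting up the FPE legitimately: the mini-batch noise is discrete-time and genuinely $\vs{\theta}$-dependent through $v_U(\vs{\theta}),\vs{V}_f(\vs{\theta})$, so one must invoke the infinitesimal-diffusion approximation to freeze those coefficients; otherwise $\partial_{\vs{\theta}}$ acting on a $\vs{\theta}$-dependent shift, together with derivative-of-diffusion terms, would appear and fail to cancel (mirroring the correction terms familiar from stochastic-gradient HMC). A secondary point worth stating with care is the matrix structure of $\vs{S}_{\theta}$ and $\vs{B}_f$: $\vs{V}_f$ being a covariance is symmetric, which keeps $b_f^{\langle i,j\rangle}=b_f^{\langle j,i\rangle}$ and makes the index asymmetry between the dynamics (which uses $m_{\theta_i}$) and the shift (which uses $m_{\theta_j}$) consistent under $i\leftrightarrow j$. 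One may optionally motivate the shift first by a steady-state argument --- at equilibrium the thermostats settle to a nonzero mean that exactly balances the noise-induced heating of the momenta --- and then confirm it via the FPE check above.
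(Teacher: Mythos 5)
Your proposal is correct and follows essentially the same route as the paper: set up the Fokker--Planck equation for the dynamics in Eq.~\eqref{eq:tact-hmc-dynamics}, observe that the Hamiltonian (deterministic, $\ts\Gamma$-only) part cancels, and balance the thermostat drifts against the momentum diffusions, which forces the shifted-Gaussian thermostat marginals multiplying the canonical distribution. The only difference is cosmetic --- you verify the stated density by block-wise substitution, whereas the paper solves the reduced stationary FPE under a factorisation ansatz to derive those Gaussian marginals --- and your cautions about freezing $b_U,\vs{B}_f$ (dropping their $\vs{\theta}$-dependence) and about ergodicity match the paper's assumptions.
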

\begin{proof}
Recall FPE in its vector form \cite{risken1989fpe}:
\begin{align}
\label{eq:fpe}
\frac{\partial}{\partial{t}}\pi(\vs{x},t) = -\frac{\partial}{\partial{\vs{x}}} \cdot \Big[\vs{\mu}_x(\vs{x},t)\pi(\vs{x},t)\Big] + \bigg[\frac{\partial}{\partial{\vs{x}}} \frac{\partial^\top}{\partial{\vs{x}}}\bigg] \cdot \Big[\vs{B}_x(\vs{x},t)\pi(\vs{x},t)\Big],
\end{align}
where $\vs{x} = \mathrm{vec}(\ts\Gamma,\vs{S}_{\theta},s_{\xi})$ denotes the vectorisation of the collection of all variables defined in Eq. \eqref{eq:invariant}, $\vs{\mu}_x$ and $\vs{B}_x$ represent the drift and diffusion terms associated with the dynamics in Eq. \eqref{eq:tact-hmc-dynamics}, respectively, and the dot operator $\cdot$ defines the composition of summation after element-wise multiplication.

We substitute the corresponding elements within Eq. \eqref{eq:tact-hmc-dynamics} into the drift and diffusion of FPE in Eq. \eqref{eq:fpe}. As we assume each of the thermostats being an \emph{independent variable} to each other and $\ts\Gamma$, the invariant distribution can thus be factorised into marginals as $\pi(\vs{x}) = \pi_{\Gamma}\pi_{s_{\xi}}\prod_{i,j}\pi_{s_{\theta}^{\langle i,j \rangle}}$. It is straightforward to verify that those deterministic parts with the dependency only on $\ts\Gamma$ cancel exactly with each other. The remnants are the stochastic parts as well as the deterministic ones that depend on the thermostats $\vs{S}_{\theta}$ and $s_{\xi}$, which can be formulated as
\vskip -0.225in
{\footnotesize
\begin{align}\label{eq:fpe-expansion}
&\frac{\partial}{\partial{t}}\pi(\vs{x},t) = \frac{\partial}{\partial{p_{\xi}}} \Big[ \big[\lambda'(\xi)\big]^2 s_{\xi}p_{\xi}\pi \Big] - \frac{\partial}{\partial{s_{\xi}}} \Bigg[ \frac{\big[\lambda'(\xi)\big]^2}{\kappa_{\xi}} \bigg[\frac{p_{\xi}^2}{m_{\xi}} - T\bigg]\pi \Bigg] + \frac{\partial^2}{\partial{p_{\xi}}^2} \Big[ \big[\lambda'(\xi)\big]^2b_U\pi \Big]\notag\\
&\quad+ \frac{\partial}{\partial{\vs{p}_{\theta}}} \cdot \Big[ \lambda^2(\xi)\vs{S}_{\theta}\vs{p}_{\theta}\pi \Big] - \sum_{i,j} \frac{\partial}{\partial{s_{\theta}^{\langle i,j \rangle}}} \Bigg[ \frac{\lambda^2(\xi)}{\kappa_{\theta}^{\langle i,j \rangle}} \bigg[\frac{p_{\theta_i}p_{\theta_j}}{m_{\theta_i}} - T\delta_{ij}\bigg]\pi \Bigg] + \bigg[\frac{\partial}{\partial{\vs{p}_{\theta}}} \frac{\partial^\top}{\partial{\vs{p}_{\theta}}}\bigg] \cdot \Big[\lambda^2(\xi)\vs{B}_f\pi\Big].
\end{align}
}%
We solve for the invariant distribution $\pi(\vs{x})$ by equating Eq. \eqref{eq:fpe-expansion} to zero. The resulted formulae for the marginals $\pi_{s_{\xi}}$ and $\pi_{s_{\theta}^{\langle i,j \rangle}}$ are obtained under the assumption of factorisation in the form of
\vskip -0.25in
\begin{align}
\label{eq:pde-pi}
\frac{1}{\pi_{s_{\xi}}}\frac{\partial{\pi_{s_{\xi}}}}{\partial{s_{\xi}}} = -\frac{\kappa_{\xi}}{T} \bigg[ s_{\xi} - \frac{b_U}{m_{\xi}T} \bigg] \quad\mbox{and}\quad \frac{1}{\pi_{s_{\theta}^{\langle i,j \rangle}}}\frac{\partial{\pi_{s_{\theta}^{\langle i,j \rangle}}}}{\partial{s_{\theta}^{\langle i,j \rangle}}} = -\frac{\kappa_{\theta}^{\langle i,j \rangle}}{T} \bigg[ s_{\theta}^{\langle i,j \rangle} - \frac{b_f^{\langle i,j \rangle}}{m_{\theta_j}T} \bigg].
\end{align}
\vskip -0.1in
The solutions to Eq. \eqref{eq:pde-pi} are clear: both $\pi_{s_{\xi}}$ and $\pi_{s_{\theta}^{\langle i,j \rangle}}$ are Gaussian distributions determined uniquely by the coefficients. The marginals $\pi_{s_{\xi}}$ and $\pi_{s_{\theta}^{\langle i,j \rangle}}$, along with the canonical distribution $\pi_{\Gamma}$ w.r.t. $H(\ts\Gamma)$ in Eq. \eqref{eq:extended-hamiltonian}, constitute the invariant distribution defined in Eq. \eqref{eq:invariant}.
\end{proof}

Theorem \ref{thm:main} states that, when the system reaches equilibrium, the system state is distributed as Eq. \eqref{eq:invariant}, and the mini-batch noise is absorbed into the thermostats from the system dynamics in Eq. \eqref{eq:tact-hmc-dynamics}. Thus, we can marginalise out both $\vs{S}_{\theta}$ and $s_{\xi}$ to drop the noise, and then obtain the canonical distribution in Eq. \eqref{eq:canonical}. As we are seeking for the recovery of the target posterior from the canonical distribution, we can assign specific values to the tempering variable $\xi = \xi^*$ such that the effective temperature of the original system is held fixed at unity $\nicefrac{T}{\lambda(\xi^*)} = 1$. Hence, the posterior $\rho(\vs{\theta}|\mathscr{D})$ equals to the marginal distribution w.r.t. $\vs{\theta}$ given $\xi^*$ satisfying $\lambda(\xi^*) = T$, which is obtained by the marginalisation of $\vs{p}_{\theta}$ and $p_{\xi}$ over the canonical distribution as follows:
\begin{align*}
\pi(\vs{\theta}|\xi^*) = \sum_{\vs{p}_{\theta},p_{\xi}} \pi(\ts\Gamma|\xi^*) = \frac{\sum_{\vs{p}_{\theta},p_{\xi}} e^{-\nicefrac{H(\ts\Gamma|\xi^*)}{T}}}{\sum_{\ts\Gamma\backslash\xi} e^{-\nicefrac{H(\ts\Gamma|\xi^*)}{T}}} = \frac{e^{-U(\vs{\theta})}}{\sum_{\vs{\theta}} e^{-U(\vs{\theta})}} = \frac{1}{Z_{\vs{\theta}}(T)} e^{-U(\vs{\theta})} = \rho(\vs{\theta}|\mathscr{D}),
\end{align*}
where $H(\ts\Gamma|\xi^*) = \lambda(\xi^*)U(\vs{\theta}) + W(\xi^*) + \vs{p}_{\theta}^{\top}\vs{M}_{\theta}^{-1}\vs{p}_{\theta}/2 + p_{\xi}^2/2m_{\xi}$ represents the extended Hamiltonian conditioning on the tempering variable $\xi = \xi^*$, when $\lambda(\xi^*) = T$ holds.


\subsection{Tempering enhancement via adaptive biasing force}
\label{subsec:abf}

A necessary condition for the tempering scheme to be well-functioning is that the tempering variable $\xi$ can properly explore the majority of the domain of the coupling function $\mathrm{dom}\lambda(\xi)$; this ensures the expected variation on the effective temperature during sampling. For complex systems, however, it is often the case that the tempering variable is subject to a strong instantaneous force that prevents $\xi$ from proper exploration of $\mathrm{dom}\lambda(\xi)$ and therefore hinders the efficiency of tempering. The adaptive biasing force (ABF) algorithm \cite{comer2014adaptive} has emerged as a promising solution to such problem ever since its inception \cite{darve2001calculating}, where it was introduced to address the problem on fast calculation of the free energy of complex chemical or biochemical systems. Intuitively, ABF maintains and updates an estimate of the average force, i.e. the average of the instantaneous force exerted on the target variable. It then applies the estimate to the target variable in the opposite direction to counteract the instantaneous force and reduce it into small zero-mean fluctuations so that the variable undergoes random walks.


\setlength{\textfloatsep}{0.6em}

\begin{algorithm}[t]
\caption{Thermostat-assisted continuously-tempering Hamiltonian Monte Carlo}
\label{alg:tact-hmc}
{\small
\begin{algorithmic}[1]
\Require{$\mbox{stepsize}~\eta_{\theta}, \eta_{\xi};~~ \mbox{level of injected noise}~ c_{\theta}, c_{\xi};~~ \mbox{thermal inertia}~ \gamma_{\theta}, \gamma_{\xi};~~ \mbox{\# of steps for unit interval}~ K$}
\State{$\vs{r}_{\theta} \sim \mathscr{N}({0}, \eta_{\theta}\vs{I}) ~~\mbox{and}~~ r_{\xi} \sim \mathscr{N}(0, \eta_{\xi});\quad (z_{\theta}, z_{\xi}) \gets (c_{\theta}, c_{\xi})$}
\State{$\Call{initialise}{~\vs{\theta}, \xi, \mathtt{abf}, \mathtt{samples}~}$}
\For{$k = 1,2,3,\dots$}
\State{$\lambda \gets \Call{lambda}{~\xi~};\quad \delta{\lambda} \gets \Call{lambdaDerivative}{~\xi~}$}
\State{$z_{\xi} \gets z_{\xi} + \delta{\lambda}^2 \big[r_{\xi}^2 - \eta_{\xi}\big]\big/\gamma_{\xi}$}
\State{$z_{\theta} \gets z_{\theta} + \lambda^2 \big[\vs{r}_{\theta}^\top\vs{r}_{\theta}/\dim(\vs{r}_{\theta}) - \eta_{\theta}\big]\big/\gamma_{\theta}$}
\State{$\mathscr{S} \gets \Call{nextBatch}{~\mathscr{D}, k~};\quad \delta{A} \gets \mathtt{abf}[~\Call{abfIndexing}{~\xi~}~]$}
\State{$\tilde{U} \gets \Call{modelForward}{~\vs{\theta}, \mathscr{S}~};\quad \tilde{\vs{f}} \gets \Call{modelBackward}{~\vs{\theta}, \mathscr{S}~}$}
\State{$r_{\xi} \gets r_{\xi} - \delta{\lambda} \big[\eta_{\xi}\tilde{U} + \mathscr{N}(0,2c_{\xi}\eta_{\xi})\big] - \delta{\lambda}^2z_{\xi}r_{\xi} + \eta_{\xi}\delta{A}$}
\State{$\vs{r}_{\theta} \gets \vs{r}_{\theta} + \lambda \big[\eta_{\theta}\tilde{\vs{f}} + \mathscr{N}({0},2c_{\theta}\eta_{\theta}\vs{I}) \big] - \lambda^2z_{\theta}\vs{r}_{\theta}$}
\State{$\Call{abfUpdate}{~\mathtt{abf}, \xi, \delta{\lambda}, \tilde{U}, k~}$}
\State{$\xi \gets \xi + r_{\xi}$}
\If{$\Call{isInsideWell}{~\xi~} = \texttt{false}$}\Comment{$\xi$ is restricted by the \emph{well} of infinite height.}
\State{$r_{\xi} \gets -r_{\xi};\quad \xi \gets \xi + r_{\xi}$}\Comment{$\xi$ bounces back when hitting the wall.}
\EndIf
\State{$\vs{\theta} \gets \vs{\theta} + \vs{r}_{\theta}$}
\If{$k = 0 \mod K ~~\mbox{and}~~ \lambda = 0$}
\State{$\Call{append}{~\mathtt{samples}, \vs{\theta}~}$}\Comment{$\vs{\theta}$ is collected as a new sample in $\mathtt{samples}$.}
\State{$\vs{r}_{\theta} \sim \mathscr{N}({0}, \eta_{\theta}\vs{I}) ~~\mbox{and}~~ r_{\xi} \sim \mathscr{N}(0, \eta_{\xi})$}\Comment{$\vs{r}_{\theta}, r_{\xi}$ is \emph{optionally} resampled.}
\EndIf
\EndFor
\vspace*{0.2em}
\Function{abfUpdate}{$~\mathtt{abf}, \xi, \delta{\lambda}, \tilde{U}, k~$}
\State{$j \gets \Call{abfIndexing}{~\xi~}$}\Comment{$\xi$ is mapped to the index $j$ of the associated bin.}
\State{$\mathtt{abf}[~j~] \gets [1-\nicefrac{1}{k}]\mathtt{abf}[~j~] + [\nicefrac{1}{k}]\delta{\lambda}\cdot\tilde{U}$}
\EndFunction
\end{algorithmic}
}%
\end{algorithm}


Formally, the function of free energy w.r.t. $\xi$ is defined by convention in the form of
\vskip -0.225in
\begin{align*}
A(\xi) = -T\log{\pi(\xi)} + \const, ~~~\mbox{where}~ \pi(\xi) = \sum\nolimits_{\ts\Gamma\backslash\xi} \pi(\ts\Gamma) ~\mbox{with the extended state}~ \ts\Gamma = (\vs{\theta},\xi,\vs{p}_{\theta},p_{\xi}).
\end{align*}
\vskip -0.10in
The equation of $p_{\xi}$ in Eq. \eqref{eq:tact-hmc-dynamics} is then augmented with the derivative of $A(\xi)$ such that
\vskip -0.2375in
\begin{align}
\label{eq:abf-dynamics}
\diff{p_{\xi}}\big/\diff{t}= -\lambda'(\xi) \tilde{U}(\vs{\theta}) - W'(\xi) + A'(\xi) - \big[\lambda'(\xi)\big]^2 s_{\xi}p_{\xi},
\end{align}
\vskip -0.0875in
where $A'(\xi)$ is referred to as the adaptive biasing force induced by the free energy as
\vskip -0.225in
{\small
\begin{align}
\label{eq:abf}
A'(\xi) = -\frac{T}{\pi(\xi)}\frac{\diff{\pi}}{\diff{\xi}} = \frac{ \sum_{\ts\Gamma\backslash\xi} \big[ \frac{\partial{H}}{\partial{\xi}} \big] e^{-\nicefrac{H(\ts\Gamma)}{T}} }{ \sum_{\ts\Gamma\backslash\xi} e^{-\nicefrac{H(\ts\Gamma)}{T}} } \coloneqq \left\langle \frac{\partial{H}}{\partial{\xi}} \middle| \xi \right\rangle.
\end{align}
}%
\vskip -0.10in
The brackets $\langle \cdot | \xi \rangle$ denote the conditional average, i.e. the average on the canonical distribution $\pi(\ts\Gamma)$ with $\xi$ held fixed. $A'(\xi)$ is  the average of the reversed instantaneous force on $\xi$. It is proved \cite{lelievre2008long} that for ergodic systems, ABF converges to the equilibrium at which $\xi$'s free energy landscape is flattened, even though the augmentation in Eq. \eqref{eq:abf-dynamics} alters the equations of motion originally defined in Eq. \eqref{eq:tact-hmc-dynamics}.


\vspace{-0.75em}
\subsection{Implementation}
\label{subsec:impl}
\vspace{-0.55em}

As proved in Theorem \ref{thm:main}, the dynamics in Eq. \eqref{eq:tact-hmc-dynamics} is capable of preserving the correct distribution in the presence of noise. In principle, it requires the thermostat $\vs{S}_{\theta}$ to be of size $D^2$ for the $D$-dimensional parameter $\vs{\theta}$; however, the storage is unaffordable for complex models in high dimensions. A plausible option to mitigate this issue is to assume homogeneous $\vs{\theta}$ and isotropic Gaussian noise such that the mass $\vs{M}_{\theta} = m_{\theta}\vs{I}$ and the variance $\vs{V}_f = v_f\vs{I}$: it indeed simplifies the high-dimensional $\vs{S}_{\theta}$ to scalar $s_{\theta}$. The confining potential $W(\xi)$ that determines the range of the tempering variable $\xi$ is implemented as a well of infinite height. When colliding with the boundary of $W(\xi)$, $\xi$ bounces back elastically with the velocity reversed. The Euler's method is then applied such that $\diff{t} \to \Delta{t}$.

\vspace{-0.25em}
In Eq. \eqref{eq:abf}, the calculation of $A'(\xi)$ involves the ensemble average $\langle \nicefrac{\partial{H}}{\partial{\xi}} | \xi \rangle$, thus being intractable. Here we instead calculate the time average $\sum_k \nicefrac{\partial{H}}{\partial{\xi}}|_{\xi_k}$, which  is equivalent to the ensemble average in the long-time limit under the assumption of ergodicity; it can be readily calculated in a recurrent form during sampling. To maintain the runtime estimates of $A'(\xi)$, the range of $\xi$ is divided uniformly into $J$ bins of equal length with memory initialised in each of those bins. At each time step $k$, ABF determines the index $j$ of the bin in which the tempering variable $\xi = \xi_k$ is currently located, and then updates the time average using the record in memory and the current force $\nicefrac{\partial{H}}{\partial{\xi}}|_{\xi_k}$ evaluated at $\xi_k$.

\vspace{-0.25em}
With all components assembled, we establish the TACT-HMC algorithm as Algorithm. \ref{alg:tact-hmc} with
\vskip -0.25in
{
\begin{align*}
\vs{r}_{\theta} = \frac{\vs{p}_{\theta}\Delta{t}}{m_{\theta}},~~ r_{\xi} = \frac{p_{\xi}\Delta{t}}{m_{\xi}},~~ z_{\theta} = s_{\theta}\Delta{t},~~ z_{\xi} = s_{\xi}\Delta{t},~~ \eta_{\theta} = \frac{\Delta{t}^2}{m_{\theta}},~~ \eta_{\xi} = \frac{\Delta{t}^2}{m_{\xi}},~~ \gamma_{\theta} = \frac{\kappa_{\theta}}{m_{\theta}D},~~ \gamma_{\xi} = \frac{\kappa_{\xi}}{m_{\xi}}
\end{align*}
}%
\vskip -0.10in
applied as the change of variables for the convenience of implementation. Furthermore, we introduce additional Gaussian noises $\mathscr{N}(0,2c_{\xi}\eta_{\xi})$ and $\mathscr{N}({0},2c_{\theta}\eta_{\theta}\vs{I})$ in momenta updates to improve ergodicity.


\vspace{-0.75em}
\section{Related work}
\label{sec:related-work}
\vspace{-0.75em}

Since the inception of the stochastic gradient Langevin dynamics (SGLD) \cite{welling2011bayesian}, algorithms originated from stochastic approximation \cite{robbins1951stochastic} have received increasing attention on tasks of Bayesian learning. By adding the right amount of noise to the updates of the stochastic gradient descent (SGD), SGLD manages to properly sample the posterior in a random-walk fashion akin to the full-batch Metropolis-adjusted Langevin algorithm (MALA) \cite{roberts1996exponential}. To enable the Hamiltonian dynamics for efficient state space exploration, Chen et al. \cite{chen2014stochastic} extended the mechanism designed for SGLD to HMC, and proposed the stochastic gradient Hamiltonian Monte Carlo (SGHMC). As is shown that the stochastic gradient drives the Hamiltonian dynamics to deviate, SGHMC estimates the unknown noise from the stochastic gradient with the Fisher information matrix, and then compensates the estimated noise by augmenting the Hamiltonian dynamics with an additive friction derived from the estimated Fisher matrix. It turns out that the friction can be linked to the momentum term within a class of accelerated gradient-based methods \cite{polyak1964some,nesterov1983nag,sutskever2013importance} in optimisation. Shortly after SGHMC, Ding et al. \cite{ding2014bayesian} came up with the idea of incorporating the Nos\'e-Hoover thermostat \cite{nose1984unified,hoover1985canonical} into the Hamiltonian dynamics in replacement of the constant friction in SGHMC and hence developed the stochastic gradient Nos\'e-Hoover thermostat (SGNHT). The thermostat in SGNHT serves as an adaptive friction which adaptively neutralises the mini-batch noise from the stochastic gradient into the system \cite{jones2011adaptive}.

Parallel to those aforementioned studies, recent advances in the development of continuous tempering \cite{gobbo2015extended,lenner2016continuous} as well as its applications in machine learning \cite{ye2017langevin,DBLP:conf/uai/GrahamS17} are of particular interest. Ye et al. \cite{ye2017langevin} proposed the continuously tempered Langevin dynamics (CTLD), which leverages the mechanism of continuous tempering and embeds the tempering variable in an extended stochastic gradient second-{} order Langevin dynamics. CTLD facilitates exploration on rugged landscapes of objective functions, locating the ``good'' wide valleys on the landscape and preventing early trapping in the ``bad'' narrow local minima. Nevertheless, CTLD is designed to be an initialiser for training deep neural networks; it serves as an enhancement of the subsequent gradient-based optimisers instead of a Bayesian solution. From the Bayesian perspective, Graham et al. \cite{DBLP:conf/uai/GrahamS17} developed the continuously-tempered Hamiltonian Monte Carlo (CTHMC) operating in a full-batch setting. CTHMC augments the Hamiltonian system with an extra continuously-varying control variate borrowed from the scheme of continuous tempering, which enables the extended Hamiltonian dynamics to bridge between sampling a complex multimodal target posterior and a simpler unimodal base distribution. Albeit beneficial for mixing, its dynamics lacks the ability to handle the mini-batch noise and hence fails to function properly with mini-batches.



\captionsetup[subfigure]{width=1\columnwidth}
\begin{figure}[t]
\centering
\subfloat[Histograms of samples generated by TACT-HMC and the ablated alternatives, with the target shown in blue.]{\includegraphics[width=0.85\columnwidth]{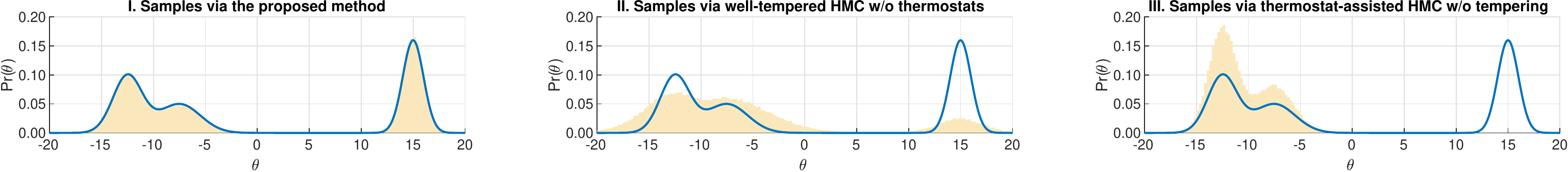}\label{fig:d1hist}}\\
\subfloat[\textsc{I}: Sampling trajectory of TACT-HMC, demonstrating robust mixing property; \textsc{II}: Cumulative averages of thermostats, indicating fast convergence to the theoretical reference values drawn by red lines; \textsc{III}: Histograms of sampled thermostats, showing a good fit to the theoretical distributions by blue curves; \textsc{IV}: Autocorrelation plot of samples, the decreasing of autocorrelation is comparably fast; \textsc{V}: (A snapshot of) variation on the effective system temperature during simulation, with the standard reference of unity temperature marked by red line.]{\includegraphics[width=1.0\columnwidth]{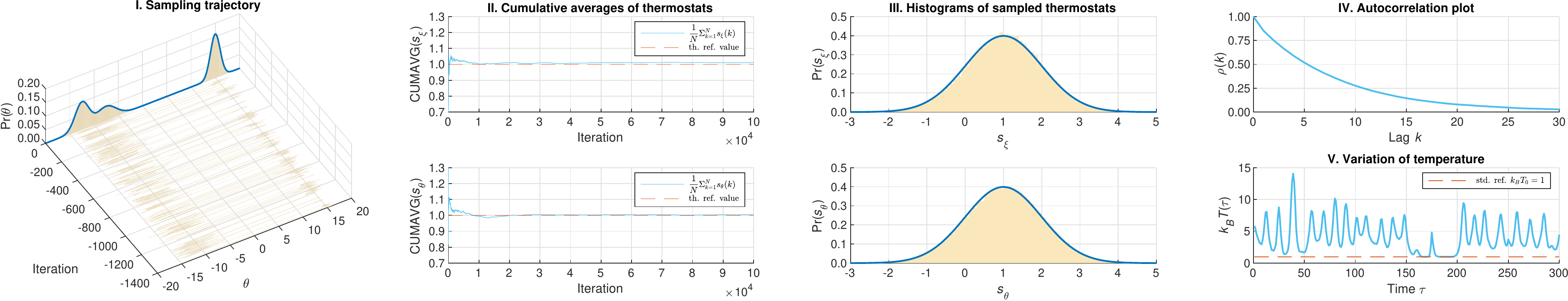}\label{fig:d1detail}}
\caption{Experiment on sampling a $1d$ synthetic distribution.}
\label{fig:d1}
\vspace{10pt}
\end{figure}

\vspace{-0.6em}
\section{Experiment}
\label{sec:experiment}
\vspace{-0.6em}

Two sets of experiments are carried out. We first conduct an ablation study with synthetic distributions, where we visualise the system dynamics and validate the efficacy of TACT-HMC. We then evaluate the performance of our method against several strong baselines on three real datasets.

\begin{figure}[t]
\centering
\includegraphics[width=0.85\columnwidth]{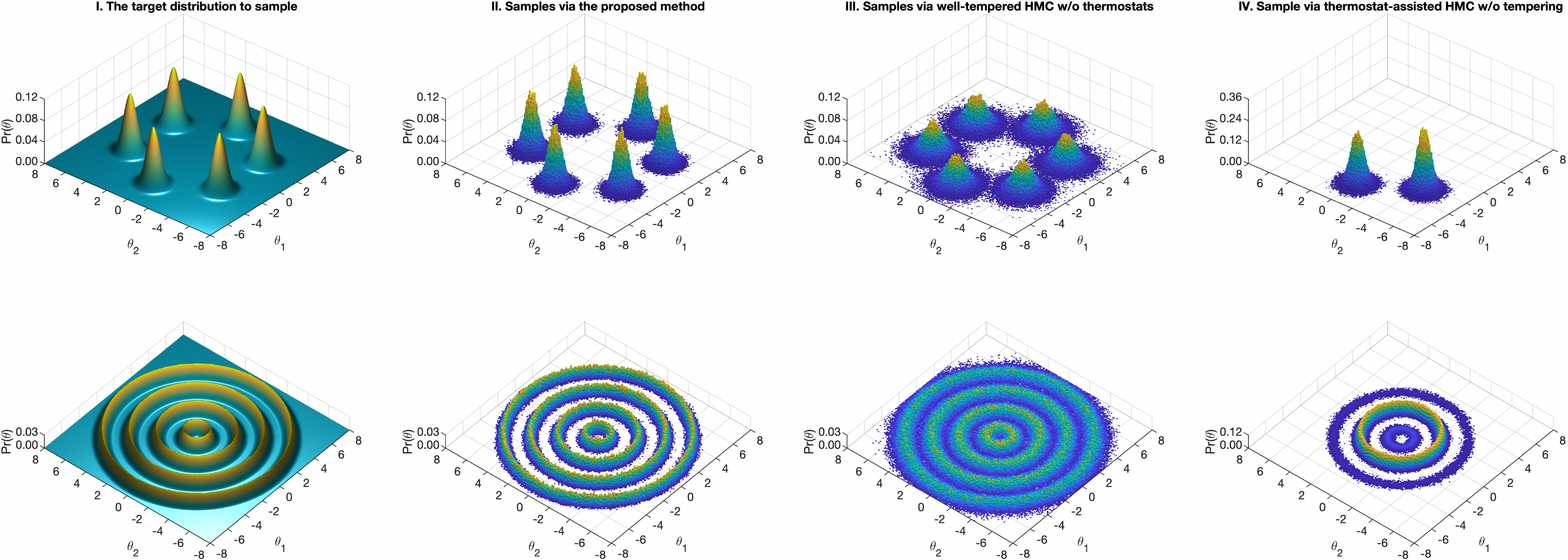}
\caption{Experiments on sampling two $2d$ synthetic distributions. \emph{Left}: The distributions to sample; \emph{Mid-left}: Histograms sampled by TACT-HMC; \emph{Mid-right}: Histograms by the well-tempered sampler without thermostatting; \emph{Right}: Histograms by the thermostat-assisted sampler without tempering.}
\label{fig:d2hist}
\vspace{5pt}
\end{figure}

\vspace{-0.25em}
\subsection{Multimodal sampling of synthetic distributions}
\label{subsec:synthetic}
\vspace{-0.25em}

We run TACT-HMC on three $1d/2d$ synthetic distributions. In the meantime, two ablated alternatives are initiated in parallel with the same setting: one is equipped with thermostats but without tempering for sampling acceleration, the other is well-tempered but without thermostatting against noise. The distributions are synthesised to contain multiple distant modes; the calculation of gradient is perturbed by Gaussian noise that is unknown to all samplers.

Figure \ref{fig:d1} summarises the result of sampling a mixture of three $1d$ Gaussians. As the figure indicates, only TACT-HMC is capable of correctly sampling from the target. The sampler without thermostatting is heavily influenced by the noise in gradient, resulting in a spread histogram; while the one without tempering gets trapped by those energy barriers and hence fails to explore the entire space of system configurations. The sampling trajectory and properties of TACT-HMC are illustrated in details in Fig. \ref{fig:d1detail}, which justifies the correctness of TACT-HMC. The autocorrelation of samples $\rho(k)$ is calculated and shown in Fig. \ref{fig:d1detail}(\textsc{iv}), which decreases monotonically from $\rho(0) = 1$ down to $\rho(\infty) \to 0^+$. The effective sample size (ESS) can thus be readily evaluated through the formula
\vskip -0.25in
\begin{align*}
\mathrm{ESS} = \frac{n}{1 + 2\sum_{k = 1}^{\infty} \rho(k)},\quad\mbox{with $\rho(k)$ as the autocorrelation at lag $k$}.
\end{align*}
\vskip -0.15in
The ESS for TACT-HMC in this $1d$ Gaussian mixture case is $2.1096\times 10^4$ out of $n = 10^5$ samples, which is roughly $60.2\%$ of the value for SGHMC and $50.9\%$ of that for SGNHT. We believe that the non-linear interaction between the parameter of interest $\vs{\theta}$ and the tempering variable $\xi$ via the multiplicative term $\lambda(\xi)U(\vs{\theta})$ results in a longer autocorrelation time and hence a lower ESS value. We also investigate the variation of the effective system temperature during sampling. A snapshot of the trajectory regarding the effective system temperature is demonstrated in Fig. \ref{fig:d1detail}(\textsc{v}): it constantly oscillates between higher and lower temperatures, and returns to the unity temperature occasionally. 

We further conduct two $2d$ sampling experiments as shown in Fig. \ref{fig:d2hist}. Comparing between columns, we find that TACT-HMC recovers those multiple modes for both distributions while neutralising the influence of the noise in gradient; however, the samplings by the ablated alternatives are impaired either by the noise in gradient or by the energy barriers as discovered in the $1d$ scenario.

\vspace{-0.25em}
\subsection{Bayesian learning on real datasets}
\label{subsec:real-world}
\vspace{-0.25em}

Stepping out of the study on the synthetic cases, we then move on to the tasks of image classification on three real datasets:  EMNIST\footnote{https://www.nist.gov/itl/iad/image-group/emnist-dataset}, Fashion-MNIST\footnote{https://github.com/zalandoresearch/fashion-mnist} and CIFAR-10. 
The performance is evaluated and compared in terms of the accuracy of classification on three types of neural networks: multilayer perceptrons (MLPs), recurrent neural networks (RNNs), and convolutional neural networks (CNNs). 
Two recent samplers are chosen as part of the baselines, namely SGNHT \cite{ding2014bayesian} and SGHMC \cite{chen2014stochastic}; besides, two widely-used gradient-based optimisers, Adam \cite{DBLP:journals/corr/KingmaB14} and momentum SGD \cite{sutskever2013importance}, are compared.

Each method will keep running for $1000$ epochs in either sampling or training before the evaluation and comparison. We further apply random permutation to a certain percentage $(0\%,~20\%,~\mbox{and}~30\%)$ of the training labels at the beginning of each epoch for demonstrating the robustness of our method. 
All four baselines are tuned to their best on each task; the setting of TACT-HMC will be specified for each task in the corresponding subsection. For the baseline samplers, the accuracy of classification is calculated from Monte Carlo integration on all sampled models; for the baseline optimisers, the performance is evaluated directly on test sets after training. 
The result is summarised in Table. \ref{tbl:result}.

\vspace{-0.5em}
\paragraph{EMNIST classification with MLP.}

The MLP herein defines a three-layered fully-connected neural network with the hidden layer consisting of $100$ neurons. EMNIST Balanced is selected as the dataset, where $47$ categories of images are split into a training set of size 112,800 and a complementary test set of size 18,800. The batch size is fixed at $128$ for all methods in both sampling and training tests. 
For readability, we introduce a $7$-tuple $[\eta_{\theta}, \eta_{\xi}, c_{\theta}, c_{\xi}, \gamma_{\theta}, \gamma_{\xi}, K]$ as the specification to set up TACT-HMC (see Alg. \ref{alg:tact-hmc}). In this specification, $[\eta_{\theta}, c_{\theta}, \gamma_{\theta}]$ denote the step size, the level of the injected Gaussian noise and the thermal inertia, all w.r.t. the parameter of interest $\vs{\theta}$; similarly, $[\eta_{\xi}, c_{\xi}, \gamma_{\xi}]$ represent the quantities corresponding to the tempering variable $\xi$; $K$ defines the number of steps in simulating a unit interval. In this experiment, TACT-HMC is configured as $[0.0015, 0.0015, 0.05, 0.05, 1.0, 1.0, 50]$.

\vspace{-0.5em}
\paragraph{Fashion-MNIST classification with RNN.}

The RNN contains a LSTM layer \cite{hochreiter1997long} as the first layer, with the input/output dimensions being $28/128$. It takes as the input via scanning a $28\times 28$ image vertically each line of a time. After $28$ steps of scanning, the LSTM outputs a representative vector of length $128$ into ReLU activation, which is followed by a dense layer of size $64$ with ReLU activation. The prediction regarding $10$ categories is generated through softmax activation in the output layer. The batch size is fixed at $64$ for all methods in comparison. The specification of TACT-HMC in this experiment is determined as $[0.0012, 0.0012, 0.15, 0.15, 1.0, 1.0, 50]$. 

\vspace{-0.5em}
\paragraph{CIFAR-10 classification with CNN.}

The CNN comprises of four learnable layers: from the bottom to the top, a $2d$ convolutional layer using the kernel of size $3\times 3\times 3\times 16$, and another $2d$ convolutional layer with the kernel of size $3\times 3\times 16\times 16$, then two dense layers of size $100$ and $10$. ReLU activations are inserted between each of those learnable layers. For each convolutional layer, the stride is set to $1\times 1$, and a pooling layer with $2\times 2$ stride is appended after the ReLU activation. Softmax function is applied for generating the final prediction over $10$ categories. The batch size is fixed at $64$ for all methods. 
Here, TACT-HMC's specification is set as $[0.0010, 0.0010, 0.10, 0.10, 1.0, 1.0, 50]$. 


\begin{table}[t]
\vspace{-5pt}
\caption{\small Result of Bayesian learning experiments on real datasets}
\label{tbl:result}
\centering
\scalebox{0.80}{
\begin{tabular}{l r r r r r r r r r}
\Xhline{1.2pt}
                                            & \multicolumn{3}{c}{MLP on EMNIST}       & \multicolumn{3}{c}{RNN on Fashion-MNIST}       & \multicolumn{3}{c}{CNN on CIFAR-10}\Tstrut\Bstrut\\
\cline{2-10}
\% permuted labels                          &     0\% &    20\% &    30\% &     0\% &    20\% &    30\% &     0\% &    20\% &    30\%\Tstrut\Bstrut\\
\Xhline{0.7pt}
Adam \cite{DBLP:journals/corr/KingmaB14}    & 83.39\% & 80.27\% & 80.63\% & 88.84\% & 88.35\% & 88.25\% & 69.53\% & 72.39\% & 71.05\%\Tstrut\\
momentum SGD \cite{sutskever2013importance} & 83.95\% & 82.64\% & 81.70\% & 88.66\% & 88.91\% & 88.34\% & 64.25\% & 65.09\% & 67.70\%       \\
SGHMC \cite{chen2014stochastic}             & 84.53\% & 82.62\% & 81.56\% & 90.25\% & 88.98\% & 88.49\% & 76.44\% & 73.87\% & 71.79\%       \\
SGNHT \cite{ding2014bayesian}               & 84.48\% & 82.63\% & 81.60\% & 90.18\% & 89.10\% & 88.58\% & 76.60\% & 73.86\% & 71.37\%\Bstrut\\
\Xhline{0.7pt}
TACT-HMC (Alg. \ref{alg:tact-hmc})          & \textbf{84.85}\% & \textbf{82.95}\% & \textbf{81.77}\% & \textbf{90.84}\% & \textbf{89.61}\% & \textbf{89.01}\% & \textbf{78.93}\% & \textbf{74.88}\% & \textbf{73.22}\%\Tstrut\Bstrut\\
\Xhline{1.2pt}
\end{tabular}
}
\end{table}


\vspace{-0.5em}
\paragraph{Discussion.}
As summarised in Table. \ref{tbl:result}, TACT-HMC outperforms all four baselines on the accuracy of classification. Specifically, TACT-HMC demonstrates advantages on complicated tasks, e.g. the CIFAR-10 classification with CNN where the model has relatively higher complexity and the dataset contains multiple channels. For the RNN task, our method outperforms others with roughly $0.5\%$ on accuracy. The performance gain on the MLP task is rather limited; we believe the reason for this is that the complexities of both model and dataset are essentially moderate. When the random permutation is applied to a larger portion of training labels, TACT-HMC still maintains robust performance on the accuracy of classification, even though the landscape of the objective function becomes rougher and the system dynamics gathers more noise.


\vspace{-0.20em}
\section{Conclusion}
\vspace{-0.225em}

We developed a new sampling method, which is called the thermostat-assisted continuously-tempered Hamiltonian Monte Carlo, to facilitate Bayesian learning with large datasets and multimodal posterior distributions. The method builds a well-tempered Hamiltonian system by incorporating the scheme of continuous tempering in the system for classic HMC, and then simulates the dynamics augmented by Nos\'e-Hoover thermostats. This sampler is designed for two substantial demands: first, to efficiently generate representative i.i.d. samples from complex multimodal distributions; second, to adaptively neutralise the noise arising from mini-batches. Extensive experiments have been carried out on both synthetic distributions and real-world applications. The result validated the efficacy of tempering and thermostatting, demonstrating great potentials of our sampler in accelerating deep Bayesian learning.


\clearpage
\bibliographystyle{plain}
{
\bibliography{tact-hmc}
}%


\clearpage
\appendix

\section*{Supplementary Information}
\renewcommand\thesubsection{\Alph{subsection}}

\subsection{Characteristics of coupling function $\lambda(\xi)$ and confining potential $W(\xi)$}

This section serves as a supplementary specification for both $\lambda(\xi)$ and $W(\xi)$.

Let us first recall the Hamiltonian of the extended system; for readability, we rewrite Eq. (4) here:
\begin{align*}
H(\ts\Gamma) = \lambda(\xi) U(\vs{\theta}) + W(\xi) + \vs{p}_{\theta}^{\top}\vs{M}_{\theta}^{-1}\vs{p}_{\theta}/2 + p_{\xi}^2/2m_{\xi},
\end{align*}
where $\lambda(\xi)\in\mathbb{R}^+$ denotes the coupling function that maps the tempering variable $\xi\in\mathbb{R}$ to a multiplier to the (inverse) temperature so that the effective temperature $T/\lambda(\xi)$ for the original system can vary. $W(\xi)$ represents the confining potential for $\xi$, acting as a potential well to physically restrict $\xi$'s range. Note that the effective temperature of the original system $T/\lambda(\xi)$ depends on the temperature of the extended system $T$, which is constant. In our experiments, we fix it at $T = 1$.

An illustration of both the coupling function $\lambda(\xi)$ and the confining potential $W(\xi)$ is shown in Fig. \ref{fig:W-n-lambda}. The potential $W(\xi)$, as explained in §3.3, is implemented as a well of infinite height, depicted as the vertical red dashed lines. The coupling function $\lambda(\xi)$, on the other hand, is a first-order differentiable function with a plateau at the level of $\lambda(\xi^*) = 1$, drawn as the blue curve. We highlight the plateau by the shaded region in Fig. \ref{fig:W-n-lambda}. For simplicity, $\lambda(\xi)$ is built to be even, i.e. symmetric w.r.t. the $y$-axis.

\begin{figure}[h]
\centering
\includegraphics[width=0.625\columnwidth]{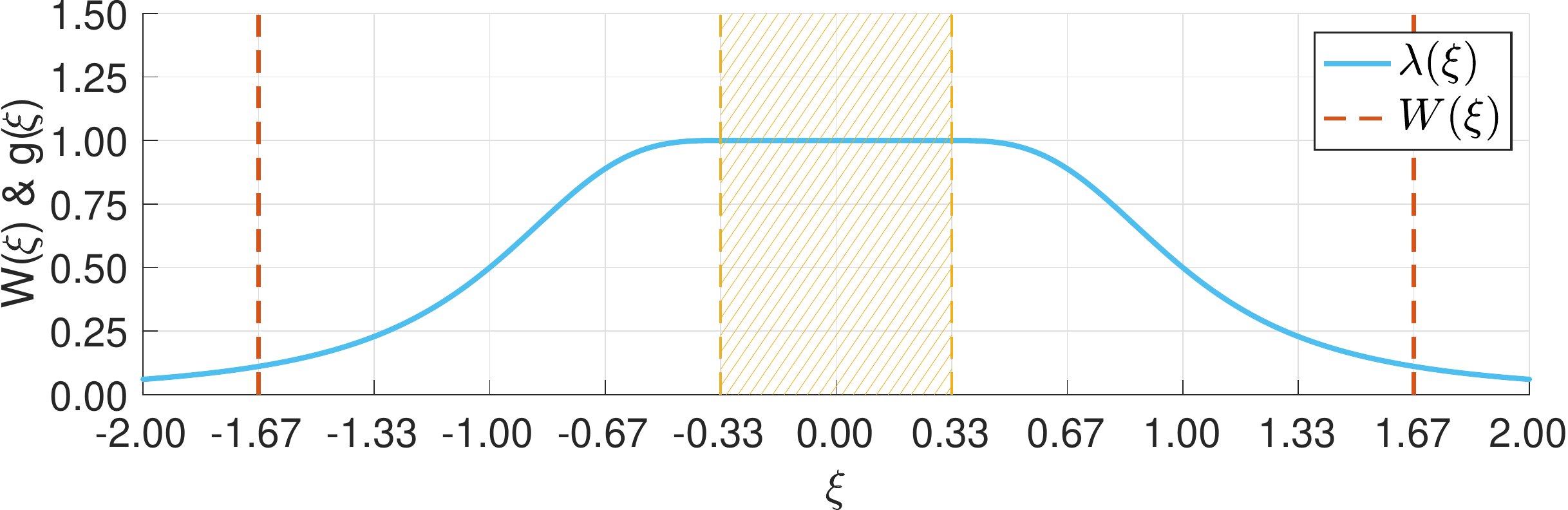}
\caption{Illustration of the coupling function $\lambda(\xi)$ and confining potential $W(\xi)$. The shaded region represents the interval of standard temperature, in which the sampler generates unbiased samples.}
\label{fig:W-n-lambda}
\end{figure}
\vspace{-0.30em}

Specifically, we define the coupling function in the form of
\[
\frac[1pt]{1}{\lambda(\xi)} = 
\begin{cases}
1 & ~\mbox{if}~ |\xi| \le \xi_0~, \\
1 + \Big(\frac[0.5pt]{|\xi| - \xi_0}{\xi_1 - \xi_0}\Big)^n & ~\mbox{if}~ |\xi| > \xi_0~, 
\end{cases}~~~\mbox{with}~\xi_1 > \xi_0 > 0~\mbox{and}~n\in\mathbb{N}^+.
\]
By definition, a plateau of width $2\xi_0$ is placed at the centre; $\lambda(\xi)$ decays monotonically as $|\xi|$ increases when $|\xi|$ exceeds $\xi_0$, and it approaches $0^+$ in the limit $|\xi| \to \infty$.

In the synthetic cases, we use the parameter $[\xi_0 = \nicefrac{1}{3}, \xi_1 = 1, n = 3]$ so that $\lambda(\xi) \equiv 1$ if $|\xi| \le \xi_0 = \nicefrac{1}{3}$. It reaches the effective temperature of $2$ when $|\xi| = \xi_1 = 1$; by placing the two walls of the potential well $W(\xi)$ at $|W_0| = \pm\nicefrac{5}{3}$, the highest temperature that can be reached is $9$.

Given a well-tuned sampler, the tempering variable $\xi$ should be able to move freely within the well $W(\xi)$, i.e. the probability of finding $\xi$ in any interval with fixed length must be equal. In this scenario, the efficiency of obtaining an unbiased system configuration is equivalent to the probability of finding the system at standard temperature, which then equals to the chance of $\xi$ being found on the plateau:
\[
\mbox{efficiency} = \frac[1pt]{|\xi_0|}{|W_0|},
\]
where the efficiency of sampling, in our setting, is $20\%$.

Here we would like to emphasise that the magnitude of $\lambda'(\xi)$ may have some influence in stability of simulating the dynamics in Eq. (5). The magnitude $|\lambda'(\xi)|$ is, however, in a way inversely proportional to the ratio $\nicefrac{|\xi_0|}{|W_0|}$ given a specific form of $\lambda(\xi)$ with the highest temperature available to hold fixed (i.e. by keeping the functional of $\lambda(\xi)$ unchanged and fixing the value at $\lambda(\xi = W_0)$). For that reason, we suggest the efficiency not to exceed $25\%$.


\clearpage

\subsection{Alternative approach to tempering enhancement by \emph{Metadynamics}}

We have leveraged the adaptive biasing force (ABF) method \cite{comer2014adaptive} in Algorithm 1 in order to cancel the instantaneous force preventing the tempering variable $\xi$ from free motion. Equivalently, ABF flattens the actual potential that $\xi$ feels, which is essentially the superposition of the confining potential $W(\xi)$ and that arises from the interaction between $\xi$ and $\vs\theta$ through $\lambda(\xi)U(\vs\theta)$.

Metadynamics \cite{laio2002escaping} has emerged as an alternative to ABF for a similar purpose of enhancing sampling. It introduces a history-dependent repulsive biasing potential to the target variable, i.e. $\xi$, to discourage $\xi$ from revisiting the places it has already visited. Due to its simplicity and the robustness, this method has been widely used in a variety of disciplines, ranging from science to engineering.

To implement Metadynamics, we establish a history-dependent biasing potential $A(\xi, t)$ on $\xi$'s feasible interval confined by $W(\xi)$. The interval is then divided into $J$ bins with length $\delta$; for each bin $j$, we maintain and update a memory $A_j(t)$ stored at the centre $a_j$ of bin $j$ in the form
\[
A_j(t) = \sum_{\tau = 0}^t h_A \mathscr{I}\big[\xi(\tau)\in\mbox{bin}~j\big],
\]
where $\mathscr{I}[\xi(t)\in\mbox{bin}~j] = 1$ if $\xi(t)\in\mbox{bin}~j$ otherwise $0$ represents the indicator function and $h_A$ defines the incremental for $A_j(t)$ in each of the updates. By tracking $\xi(t)$ in the runtime, we locate the current bin $j$ and then increase the previous value $A_j(t-1)$ in that bin $j$ by $h_A$.

The resulting biasing force can be readily calculated by
\[
\partial_{\xi}A(\xi, t) =
\begin{cases}
\big[A_j(t) - A_{j+1}(t)\big]\big/\delta & ~\mbox{if}~ \xi\in\mbox{bin}~j~\mbox{and}~\xi \ge a_j~, \\
\big[A_{j-1}(t) - A_j(t)\big]\big/\delta & ~\mbox{if}~ \xi\in\mbox{bin}~j~\mbox{and}~\xi < a_j~. 
\end{cases}
\]

A major problem that the vanilla Metadynamics \cite{laio2002escaping} encounters is that it lacks a proof of convergence because the incremental $h_A$ remains constant; the biasing potential $A_j(t)$ grows proportionally to the time elapsed in simulation. Recent advances \cite{barducci2008well} in the development of Metadynamics seem to have mitigated this issue, which enables its application in a wider range of tasks \cite{valsson2016enhancing}.


\end{document}